\newtheorem{theorem}{Theorem}
\NewDocumentCommand\Cycle{O{} m m m O{} m}{
  \draw[#1](#2.{#3+asin(#6/(#4*1.41))}) arc (180+#3-45:180+#3-45-270:#6/2) #5;
} 
\tikzstyle{block} = [rectangle, draw]
\tikzstyle{input} = [coordinate]
\tikzstyle{output} = [coordinate]
\tikzstyle{pinstyle} = [pin edge={to-,thin,black}]
\begin{document}

\title{TrumorGPT: Graph-Based Retrieval-Augmented Large Language Model for Fact-Checking}

\author{Ching~Nam~Hang, Pei-Duo~Yu and~Chee~Wei~Tan
\thanks{This research was supported in part by the Saint Francis University Institute Development Grant (Publication) (IDG-P240205), the Singapore Ministry of Education Academic Research Fund Tier 1 (RG91/22) and Tier 2 (T2EP20224-0031), as well as the Research Grants Council of Hong Kong under its Institutional Development Scheme Research Infrastructure Grant (No. UGC/IDS(R)11/21) and Collaborative Research Grant (No. UGC/IDS(C)11/E01/24).}
\thanks{The material in this paper was presented in part at the 2024 58th Annual Conference on Information Sciences and Systems, Princeton, USA.}
\thanks{C. N. Hang is with the Yam Pak Charitable Foundation School of Computing and Information Sciences, Saint Francis University, Hong Kong (e-mail: cnhang@sfu.edu.hk).}
\thanks{P.-D. Yu is with the Department of Applied Mathematics, Chung Yuan Christian University, Taiwan (e-mail: peiduoyu@cycu.edu.tw).}
\thanks{C. W. Tan is with the College of Computing and Data Science, Nanyang Technological University, Singapore (e-mail: cheewei.tan@ntu.edu.sg).}
}




\maketitle

\begin{abstract}
In the age of social media, the rapid spread of misinformation and rumors has led to the emergence of infodemics, where false information poses a significant threat to society. To combat this issue, we introduce \textit{TrumorGPT}, a novel generative artificial intelligence solution designed for fact-checking in the health domain. TrumorGPT aims to distinguish ``trumors'', which are health-related rumors that turn out to be true, providing a crucial tool in differentiating between mere speculation and verified facts. This framework leverages a large language model (LLM) with few-shot learning for semantic health knowledge graph construction and semantic reasoning. TrumorGPT incorporates graph-based retrieval-augmented generation (GraphRAG) to address the hallucination issue common in LLMs and the limitations of static training data. GraphRAG involves accessing and utilizing information from regularly updated semantic health knowledge graphs that consist of the latest medical news and health information, ensuring that fact-checking by TrumorGPT is based on the most recent data. Evaluating with extensive healthcare datasets, TrumorGPT demonstrates superior performance in fact-checking for public health claims. Its ability to effectively conduct fact-checking across various platforms marks a critical step forward in the fight against health-related misinformation, enhancing trust and accuracy in the digital information age.
\end{abstract}

\begin{IEEEImpStatement}
Fact-checking for public health claims is essential in the current digital landscape, where misinformation can quickly spread and influence public health decisions. Unverified or misleading health information can lead to harmful behaviors, mistrust in medical recommendations, and the undermining of public health campaigns. For example, during the COVID-19 pandemic, false claims about vaccines caused significant confusion, reducing vaccination rates and contributing to the continued spread of the virus. Accurate and timely verification of such claims is critical to ensure that individuals make informed decisions based on factual information. TrumorGPT addresses this urgent need by providing a robust, AI-powered fact-checking framework for public health claims. Through its use of GraphRAG with semantic health knowledge graphs, it verifies claims with up-to-date information, helping users differentiate between fact and misinformation. By doing so, TrumorGPT plays a crucial role in ensuring trust in public health communications and safeguarding the well-being of society.
\end{IEEEImpStatement}

\begin{IEEEkeywords}
Graph-based retrieval-augmented generation, large language models, semantic reasoning, knowledge graph, fact-checking, health informatics.
\end{IEEEkeywords}

\section{Introduction}
\label{sec:introduction}
\IEEEPARstart{I}{n} the recent digital environment, the rise of infodemics, characterized by the widespread and rapid dissemination of misinformation through social media, has emerged as a significant issue \cite{acemoglu2023model, hang2023mega, siderius2021misinformation, acemoglu2016network}. The Coronavirus Disease 2019 (COVID-19) Infodemic declared by the World Health Organization (WHO) in 2020 is the first infodemic of a global scale, relating to the spread of misinformation related to the COVID-19 pandemic. Epidemics and rumors share similar traits \cite{acemoglu2011opinion, del2016spreading, vosoughi2018spread, tan2023contagion}. Waves of infodemics always follow after a pandemic occurs, and they spread faster, literally at the speed of light, in the cyberspace of today. Whether it is through hoaxes or viral conspiracy theories, information travels fast these days due to the extensive online social networks that have a huge number of users and are playing increasingly important and critical roles in our society for information retrieval \cite{luo2021spread}. This ``word-of-mouth'' mechanism can cause rumors and false information to spread rampantly, and, oftentimes, may have serious repercussions for individuals and nations \cite{acemoglu2024simple}. Misinformation and disinformation circulating in cyberspace go viral on the web and online social networks in a short time, eventually leading to widespread confusion, panic, and harmful actions among the public. For instance, false claims about medication efficacy can lead to adverse health effects from misuse. The complexities of the false information spreading phenomenon and its nature as a large-scale problem rapidly outstrip our ability to keep pace with rigorous fact-checking efforts.

The rise of artificial intelligence (AI) technologies in content generation has added a new dimension to creating and disseminating misleading information. AI tools have simplified the process of generating and distributing content that closely resembles authentic news, blurring the distinction between real reporting and fabricated narratives. This is not just a theoretical concern; the advent of generative AI has dramatically reshaped the landscape of misinformation in 2023. For example, NewsGuard, an organization dedicated to monitoring misinformation, noted a dramatic rise in AI-driven disinformation, from geopolitical propaganda to healthcare myths. In particular, 614 unreliable AI-generated news and information websites in 15 languages were identified, demonstrating the ability of generative AI to complicate the separation of fact from fiction. The spread of misleading health information online significantly impedes the search for credible medical advice and trustworthy sources, making it hard for people to know what to believe. Thus, there is an urgent need for reliable fact-checking, especially during critical crises like global pandemics.

Fact-checking is the process of verifying information to determine its accuracy and truthfulness \cite{walter2020fact}. Traditionally, this task has been manually performed by journalists and researchers who cross-reference claims with credible sources. However, the sheer volume and speed of information generated in the digital age have made manual fact-checking increasingly challenging \cite{alhindi2018your}. The sheer volume of information circulating online, making it difficult for human fact-checkers to keep pace with the constant influx of new content. Additionally, the rapid spread of misinformation through social media platforms often outpaces fact-checking efforts, leading to the widespread dissemination of false or misleading information before it can be adequately addressed. The process is further complicated by the subjective nature of fact-checking, as biases may inadvertently influence the evaluation of information \cite{vaughan2018making}. Moreover, fact-checking often focuses on explicit claims, leaving subtleties like framing and context manipulation unaddressed. This situation is exacerbated by the lack of collaborative efforts and standardized fact-checking methodologies in the current society \cite{dafoe2021cooperative}.

In response to these challenges, advancements in large language models (LLMs) \cite{touvron2023llama, ouyang2022training, brown2020language, anil2023palm, openai2023gpt4} and the development of knowledge graphs have emerged as important solutions in automating the fact-checking process. LLMs, trained on extensive datasets, excel in understanding and generating text, making them adept at parsing and assessing the veracity of statements. Meanwhile, knowledge graphs provide a structured representation of relationships between entities and facts, offering a rich database for cross-referencing claims. However, LLMs can produce information that is coherent but factually incorrect. For instance, LLMs might generate a response based on outdated health information, leading to inaccuracies in current health advice. This issue often stems from the dependence of models on their training data, which may not always be up-to-date or fully comprehensive. Knowledge graphs, while invaluable in providing structured factual data, face challenges in coverage and timeliness. Keeping these graphs updated with the latest information is an ongoing challenge, especially in rapidly evolving fields like public health and epidemiology. Therefore, knowledge graphs might not cover all areas of expertise equally, potentially leading to gaps in available information for fact-checking. It is critical to address these limitations inherent in LLMs and knowledge graphs and explore how they can complement and enhance each other to mitigate the shortcomings and optimize the overall fact-checking process.

In this study, we present \textit{TrumorGPT}, a novel generative AI framework designed for fact-checking to combat health-related misinformation. TrumorGPT, aptly named, focuses on identifying ``trumors'', a term that blends ``true'' and ``rumor'' to describe rumors that ultimately prove to be factual. The concept of a trumor encapsulates instances where what initially appears as mere gossip or unfounded claims eventually aligns with reality. Thus, TrumorGPT serves to uncover the truth from health-related rumors, effectively bridging the gap between skepticism and fact in public health discourse. A unique aspect of TrumorGPT is its integration of an LLM with semantic health knowledge graphs for fact verification through semantic reasoning. TrumorGPT employs three key technologies for verifying the truthfulness of health news: topic-enhanced sentence centrality \cite{zheng2019sentence} and topic-specific TextRank \cite{mihalcea2004textrank, ceri2013introduction, kazemi2020biased, florescu2017positionrank} coupled with few-shot learning to enhance the construction of semantic knowledge graphs, and Generative Pre-trained Transformer 4 (GPT-4) \cite{openai2023gpt4} integrated with graph-based retrieval-augmented generation (GraphRAG) \cite{lewis2020retrieval, edge2024local} to provide up-to-date health knowledge in semantic reasoning. The joint use of GraphRAG and natural language processing (NLP) techniques of LLMs in TrumorGPT offers a comprehensive approach to fact-checking, making it a powerful tool in the fight against health-related misinformation. Thus, TrumorGPT provides a promising solution for fact-checking, effectively countering the spread of misleading health information.

Overall, the contributions of this study are as follows:
\begin{itemize}
    \item We introduce TrumorGPT, a novel generative AI framework that jointly integrates an LLM with GraphRAG, utilizing semantic health knowledge graphs for fact-checking to fight against health-related misinformation.
    \item We propose the topic-specific TextRank algorithm with topic-enhanced sentence centrality to enhance the construction of semantic health knowledge graphs, thereby improving the semantic analysis of the LLM.
    \item We collect data from the latest verified health news to create semantic health knowledge graphs for GraphRAG, providing updated and accurate health knowledge to optimize the performance of the LLM.
    \item We demonstrate the superior performance of TrumorGPT by conducting extensive evaluations on verifying the accuracy of medical knowledge and the truthfulness of health-related news.
\end{itemize}

This paper is organized as follows. In Section \ref{sec:related}, we review the related work on using language models for providing medical and health information and fact-checking. In Section \ref{sec:framework}, we introduce TrumorGPT, a framework that leverages an LLM enhanced with GraphRAG to verify health-related facts through semantic reasoning. We detail the construction of semantic health knowledge graphs using topic-enhanced sentence centrality and topic-specific TextRank as well as the integration of these graphs with GraphRAG for improved accuracy in fact verification. In Section \ref{sec:exp}, we present the performance evaluation results, showcasing the effectiveness of TrumorGPT in accurately fact-checking health-related queries. We discuss and analyze various RAG approaches in Section \ref{sec:dis}. Finally, we conclude the paper in Section \ref{sec:conclusion}.

\section{Related Work}
\label{sec:related}
Utilizing language models to address medical and health problems has become a popular approach. For example, the authors in \cite{rasmy2021med} develop Med-BERT, a novel adaptation of the bidirectional encoder representations from transformers (BERT) framework for the structured electronic health records domain, demonstrating that its pretraining on a large dataset significantly enhances prediction accuracy in clinical tasks. The work in \cite{de2014medical} explores a variation of neural language models that learn from concepts derived from structured ontologies and free-text, employing it to measure semantic similarity between medical concepts, thereby enhancing techniques in medical informatics and information retrieval. In \cite{yang2023one}, the authors develop the LLM-Synergy framework, an innovative ensemble learning pipeline utilizing state-of-the-art LLMs to significantly enhance accuracy and reliability in diverse medical question-answering tasks through two novel ensemble methods. The work in \cite{ye2023qilin} introduces a multi-stage training method with a Chinese Medicine dataset to significantly enhance the performance of LLMs in healthcare applications. Other studies assess the effectiveness of LLMs in tackling medical and health-related issues \cite{gilbert2023large, tang2023evaluating, mesko2023impact}. These models are designed to function as healthcare assistants, providing advice within this domain only. While they might deliver factual healthcare information, they are not equipped to verify the accuracy of specific health-related news or information.

In response to the infodemic, WHO launched a platform to combat COVID-19 misinformation through targeted communication strategies and partnerships with social media platforms and international agencies \cite{zarocostas2020fight}. The authors in \cite{liu2021framework} propose an AI-based framework to enhance eHealth literacy and combat infodemics by improving access to accurate health information. The work in \cite{Gallotti_2020} develops an Infodemic Risk Index by analyzing Twitter messages, enabling early detection of unreliable information to combat the COVID-19 infodemic with targeted communication strategies. MEGA in \cite{hang2023mega} further enhances the risk index by leveraging machine learning and graph analytics to detect spambots and identify influential spreaders, thus refining the approach to infodemic risk management. These approaches act as passive surveillance tools, merely tracking misinformation instead of proactively fighting against health-related rumors.

Addressing the infodemic requires the deployment of proactive fact-checking mechanisms to effectively identify and mitigate misinformation. Traditional fact-checking, carried out by expert journalists and analysts, finds it challenging to keep up with the overwhelming amount of information constantly produced in the digital world. Automated fact-checking, integrating NLP and machine learning, becomes particularly important for accurately verifying claims in the modern rapid information flow \cite{guo2022survey}. For instance, the authors in \cite{karadzhov2017fully} propose a fully-automatic fact-checking framework utilizing a deep neural network with LSTM text encoding, which leverages the web as a knowledge source and combines semantic kernels with task-specific embeddings, for rumor detection and fact verification within community question answering forums. The work in \cite{zhou2019physiological} introduces a fact-checking framework for machine learning, using visualization of training data influence and physiological signals to measure and enhance user trust in predictive models. FactChecker in \cite{nakashole2014language} applies a language-aware truth-finding approach, utilizing linguistic analysis to evaluate source objectivity and trustworthiness, and combines these factors with co-mention influence for a more accurate believability assessment of facts. Using deep learning, the authors in \cite{harrag2022arabic} develop a neural network model specifically for identifying Arabic fake news, leveraging convolutional neural networks and a balanced Arabic corpus to enhance the accuracy of fact-checking.

Knowledge graphs play a critical role in fact-checking, providing a structured and interconnected database of facts that significantly improves the accuracy and efficiency of verifying information. In \cite{ciampaglia2015computational}, the authors demonstrate that computational fact-checking through knowledge graphs can efficiently validate information, effectively differentiating true from false claims using Wikipedia data. Tracy in \cite{gad2019tracy} enhances knowledge graph curation by providing clear explanations for fact verification through rule-based semantic traces and a user-friendly interface. The work in \cite{shi2016fact} proposes a link-prediction model in knowledge graphs for computational fact-checking, which successfully assesses the truthfulness of various claims using large-scale graphs from Wikipedia and SemMedDB. FACE-KEG in \cite{vedula2021face} enhances explainable fact-checking by constructing and utilizing knowledge graphs to evaluate the veracity of claims and generate understandable explanations. Other works \cite{koloski2022knowledge, hu2021compare, mayank2022deap} leverage knowledge graphs for fake news detection, employing innovative models and methods to improve accuracy in fact-checking across various contexts.

\section{Methods}
\label{sec:framework}
In this section, we introduce TrumorGPT, a framework that utilizes a pretrained LLM for fact-checking. The process initiates with query processing, where input from users undergoes semantic similarity analysis to find relevant health information. It is important to note that the input query from users should pertain to health-related content. Otherwise, TrumorGPT may not be able to accurately process and provide relevant information. TrumorGPT employs few-shot learning, using the topic-enhanced sentence centrality and topic-specific TextRank algorithms with examples to prepare the LLM for the creation of semantic health knowledge graphs. These graphs incorporate information from the latest and updated medical and health knowledge base, facilitating the capability of the pretrained LLM for graph-based retrieval-augmented generation (GraphRAG) in the health domain. The framework retrieves and synthesizes health information, enriching the semantic health knowledge graph, which is then used to apply semantic reasoning to the fact-checking task. The final output is a semantically reasoned answer that determines the factual accuracy of the input query. Fig. \ref{fig:archi_trumorgpt} provides a high-level overview of the TrumorGPT framework.

\begin{figure*}[tbp]
    \centering
    \includegraphics[width=.95\textwidth]{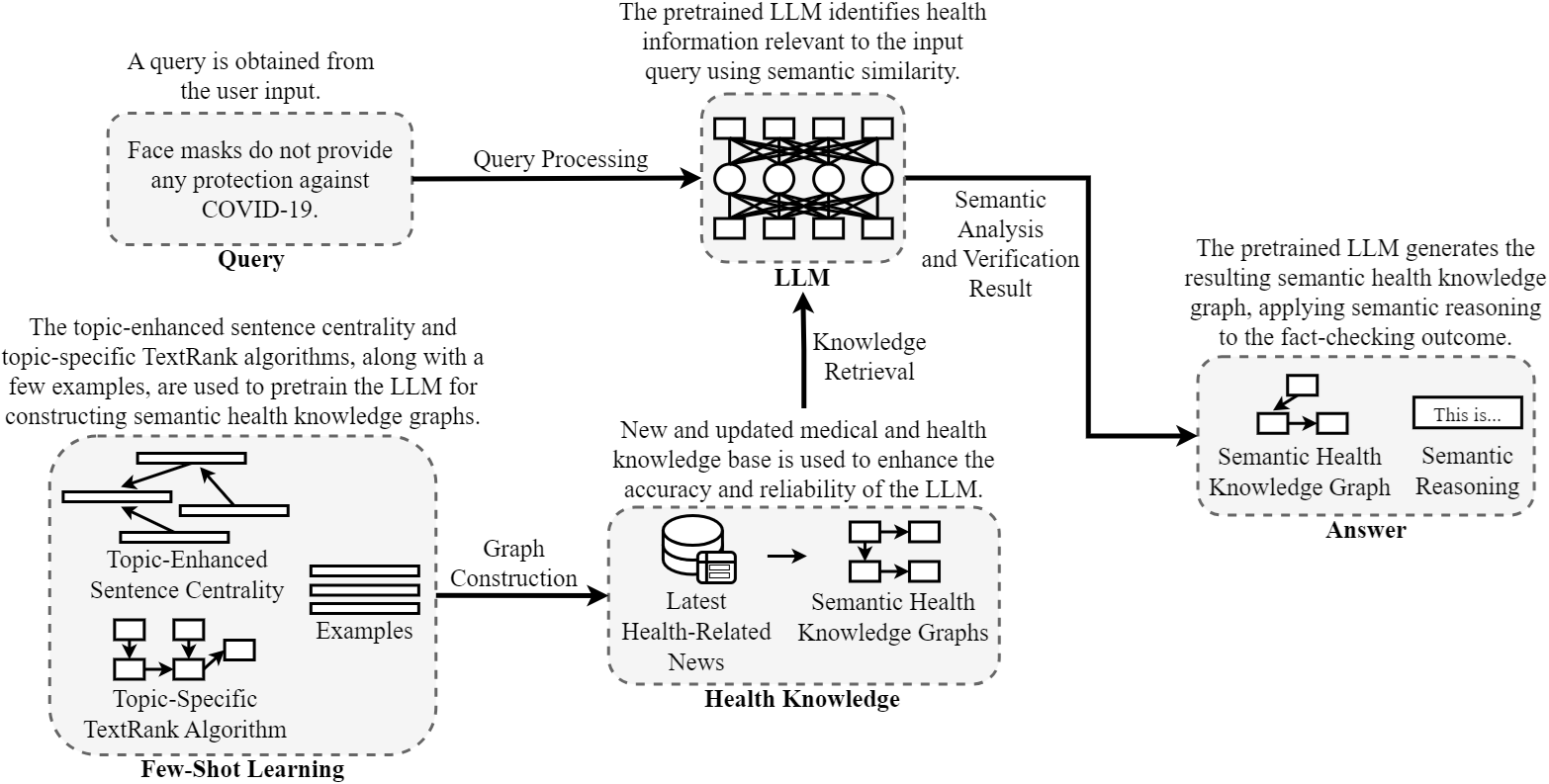}
    \caption{The architecture of TrumorGPT, showcasing the workflow from user input to fact verification. TrumorGPT leverages advanced algorithms and an extensive medical and health knowledge base to generate a verified semantic health knowledge graph and provide a reasoned answer for fact-checking.}
    \label{fig:archi_trumorgpt}
\end{figure*}

\subsection{Semantic Health Knowledge Graph}
\label{sec:knowledge_G}
A semantic health knowledge graph is an effective mechanism for encapsulating health knowledge in a format that is both structured and interpretable by machines. This graph consists of vertices that symbolize entities and edges that represent their connections. The ``semantic'' aspect of the graph ensures that entities and their interrelations are based on meaningful, contextually relevant concepts, making them understandable to both machines and humans.

We represent a semantic health knowledge graph as a directed graph $G = \{E, R, F\}$, where $E$ denotes the set of entities, $R$ indicates the set of relations, and $F$ is the triple set of relational facts. Each fact is represented as a triple $(h, r, t)\in F$, where $h\in E$ is the head entity, $t\in E$ is the tail entity, and $r\in R$ is the relationship between $h$ and $t$. For example, (``$\texttt{SARS-CoV-2}$'', ``$\texttt{causes}$'', ``$\texttt{COVID-19}$'') represents the fact that SARS-CoV-2 causes the COVID-19 pandemic. In a semantic health knowledge graph constructed from a set of statements extracted from a resource like Wikipedia, factual relations among entities in those statements are represented as a large-scale network. The truthfulness of a new statement is evaluated based on its presence as an edge in this graph or the existence of a short path in the graph linking its subject to its object. Otherwise, the absence of such edges or short paths usually indicates that the statement is untrue. As a result, the task of fact-checking involves verifying if a query, represented as a triple $(h, r, t)$, is true within the knowledge graph framework. In Fig. \ref{fig:kg_example}, we present a semantic health knowledge graph illustrating the impacts of COVID-19, vaccine development, and the differing public health policies of zero-COVID and living-with-COVID implemented by China and the United States during the pandemic.

\begin{figure}[tbp]
    \centering
    \includegraphics[width=\linewidth]{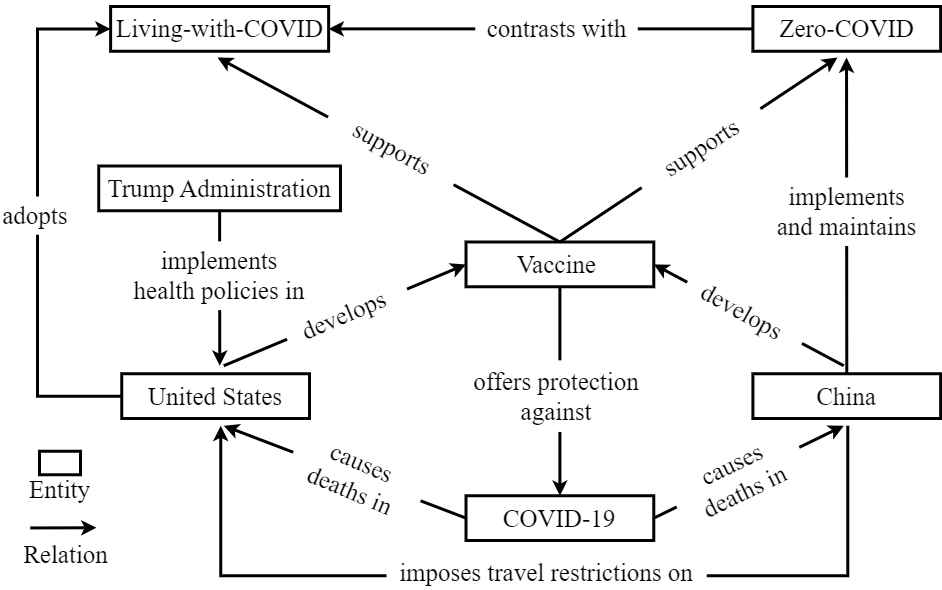}
    \caption{An illustrative semantic health knowledge graph, highlighting the global response to COVID-19 through vaccine efforts and contrasting health strategies adopted by China and the United States.}
    \label{fig:kg_example}
\end{figure}

Creating a health knowledge graph can be challenging due to the vast amount of health information available, particularly when the user input is as extensive as an article. The key lies in identifying the central topics or the main ideas from the abundant data. This process of extracting key health information enables the creation of a semantic health knowledge graph that accurately represents the core themes and relevant connections within the larger text, ensuring that the most significant health information is used for further semantic analysis and inference. To achieve this, we employ a two-step keyword extraction method. First, we use a sentence centrality-based approach to extract important sentences from the article. Then, from these summarized key sentences, we apply a ranking-based method to extract the most relevant health-related keywords. We enhance the original algorithms for sentence centrality \cite{zheng2019sentence} and TextRank \cite{mihalcea2004textrank, ceri2013introduction} by increasing the weights of sentences or words related to health topics, ensuring that sentence centrality and TextRank are more likely to assign higher relevance scores to health-related content. This approach enables us to identify key entities in health topics more effectively.

\subsection{Topic-Enhanced Sentence Centrality}\label{sec:tesc}
Topic-enhanced sentence centrality integrates topic modeling with centrality measures to identify sentences that are both central to the document and specifically relevant to the topic of interest. We combine conventional BERT embeddings with a topic-enhanced latent Dirichlet allocation (LDA) model to compute the topic-enhanced sentence centrality.

\subsubsection{Topic-Enhanced Embeddings with LDA and BERT}
\label{sec:Topic_LDA}
In LDA, topic distributions can be computed for both words and documents, typically representing the distribution of topics within a document. This concept can be extended to sentences by treating each sentence as a small ``document''.  When applying LDA at the sentence level, the topic distribution reflects the probability of a sentence belonging to each of the identified topics. Each sentence is treated as an individual document, and the LDA model is trained on this data. This allows the model to produce a topic distribution for each sentence, just as it does for full documents. Once the model is trained, we can transform any sentence to obtain its corresponding topic distribution vector.

The process begins by collecting a large corpus of documents related to a specific domain, such as health, to train the LDA model with domain-specific focus. This corpus may include health-related articles, research papers, and blog posts. Once trained, the LDA model captures health-related topics and can be applied to new sentences to generate topic distribution vectors, representing the probabilities that each sentence pertains to different health-related topics.

BERT embeddings are used to further enhance the semantic representation of sentences. Each sentence is encoded into a high-dimensional vector using BERT, capturing rich semantic features. These BERT embeddings are then combined with LDA-derived topic distribution vectors to form representations that integrate both semantic and topical information. For each sentence $s$, let $\mathbf{e}_s$ represent its BERT embedding vector and $\mathbf{t}_s=[t_{s,1},t_{s,2},\ldots ,t_{s,K}]$ denote its topic distribution vector from LDA, where $t_{s,k}$ is the probability that sentence $s$ belongs to topic $k$ and $\sum\limits_{k=1}^K t_{s,k}=1$. Rather than a simple concatenation, we introduce a weighting parameter $\eta \in [0,1]$ to balance semantic and topical contributions. Let 
$$
\tilde{e}_s = \frac{e_s}{\lVert e_s \rVert} \quad \text{and} \quad \tilde{t}_s = \frac{t_s}{\lVert t_s \rVert}
$$
denote the normalized BERT embedding and LDA topic distribution vector, respectively. The final topic-enhanced embedding is then defined as
$$
v_s = \left[\eta\tilde{e}_s; \,(1-\eta)\tilde{t}_s\right],
$$
where $[\,\cdot\,;\,\cdot\,]$ denotes vector concatenation. In our experiments, we set $\eta= 0.7$, giving slightly more weight to the semantic information from BERT than to the LDA-derived topics.

\subsubsection{Topic-Enhanced Centrality}
To identify critical sentences within the text, we construct a weighted graph $G_s$ using the combined vectors $\mathbf{v}_s$, where each vertex represents a sentence. Edges between vertices are determined based on the similarity between their corresponding vectors. Specifically, the weight $w^{s}_{i,j}$ of the edge connecting vertex $i$ and vertex $j$ is defined using cosine similarity:
\[ w^s_{i,j}=\frac{\mathbf{v}_i\cdot \mathbf{v}_j}{\lVert \mathbf{v}_i\lVert \lVert \mathbf{v}_j\lVert}.\]

We then apply a centrality measure, such as PageRank, to the graph to identify the most central sentences, which are both semantically meaningful and topically aligned with the health domain. This method ensures that the centrality computation incorporates both the domain-specific knowledge from the LDA model and the deep semantic understanding from BERT, yielding a more refined and insightful text analysis.

\subsection{Topic-Specific TextRank}\label{sec:tst}
Topic-specific TextRank (TST) adapts the traditional TextRank algorithm \cite{mihalcea2004textrank} to prioritize sentences based on their relevance to a specific topic, thereby making it more sensitive to thematic elements. In its original form, TextRank is used for keyword extraction or sentence ranking in summarization and is computed as:
\begin{align*}
    \text{TR}(v_i)=(1-d)+d\cdot \sum\limits_{j\in In(v_i)}\frac{w_{j,i}}{\sum\limits_{k\in Out(v_j)} w_{j,k}}\text{TR}(v_j),
\end{align*}
where $\text{TR}(v_i)$ denotes the TextRank score of vertex $v_i$, $d$ is the damping vector, $In(v_i)$ is the set of vertices that link to $v_i$, $Out(v_j)$ is the set of vertices linked from $v_j$, and $w_{j,i}\geq 0$ is the weight of the edge from $v_j$ to $v_i$.

We adapt the conventional TextRank into TST by incorporating the principles of topic-specific PageRank. The primary modifications involve vertex selection, edge weighting, and the scoring algorithm, all tailored to emphasize topic relevance. We begin by defining a topic relevance score $R(v_i)$ for $v_i$, measuring how closely $v_i$ (a word or sentence) relates to the target topic. Here, we slightly abuse the notation $v_i$ in a different font to represent its graph-theoretic attribute (as a vertex). To emphasize health-related content, we modify the topic relevance score $R(v_i)$ for each sentence (vertex) in the graph. Specifically, if certain topics are identified as health-related, we assign them higher weights. Suppose topics $k_1, k_2, \ldots, k_m$ are associated with health. Then, for sentence $i$, we define:
\begin{equation*}
R(v_i) = \frac{\sum_{k=1}^{K} \beta_k \cdot t_{i,k}}{\sum_{j} \sum_{k=1}^{K} \beta_k \cdot t_{j,k}},
\end{equation*}
where
\begin{equation*}
\beta_k =
\begin{cases}
\alpha & \text{if topic } k \in \{k_1, k_2, \ldots, k_m\}\\[1mm]
1 & \text{otherwise}.
\end{cases}    
\end{equation*}

In our implementation, we set $\alpha = 1.5$, giving health-related topics 50\% more influence. The normalization guarantees that
$\sum_j R(v_j) = 1$. This weighted score $R(v_i)$ is then used in constructing the sentence similarity graph and running the TST algorithm. For simplicity, we denote the vector of topic relevance scores $R(v_i)$ as $\mathbf{u}=[R(v_1), R(v_2), \ldots ,R(v_n)]$. 

To adjust the edge weight $w_{j,i}$ to reflect both the relationship between vertices and their combined relevance to the topic, we define the adjusted weight $w'_{j,i}$ as
\begin{equation*}
w'_{j,i}=\frac{R(v_i)+R(v_j)}{2}\cdot w_{j,i}.
\end{equation*}
Then, we formulate the TST as:
\begin{align}
\label{eq:TST}
    \text{TST}(v_i)=(1-d)\cdot R(v_i)+d\cdot \sum\limits_{j\in In(v_i)}\frac{w'_{j,i}\cdot \text{TST}(v_j)}{\sum\limits_{k\in Out(v_j)} w'_{j,k}}.
\end{align}

We observe that, in contrast to conventional PageRank, TST modifies the uniform teleportation to prioritize topic-relevant vertices. Additionally, the second term in (\ref{eq:TST}) adjusts the weight of the random walk based on the topic relevance score $R(v_i)$. To rewrite (\ref{eq:TST}) in the form of a Markov chain, we first define a stochastic matrix $\mathbf{P}=[p_{i,j}]_{n \times n}$ as
\begin{align}
\label{eq:P}
    p_{i,j}=\frac{w'_{j,i}}{\sum\limits_{k\in Out(v_j)}w'_{j,k}}.
\end{align}

To show that the matrix $\mathbf{P}$ is stochastic, we need to verify that each element $p_{i,j}$ is non-negative and that the sum of each column in $\mathbf{P}$ equals 1. By definition, we can deduce that $w'_{j,i} \geq 0$ since $w'_{j,i}$ is the average of two positive numbers. Next, we show that the sum of each column in $\mathbf{P}$ is $1$. From the definition of $p_{i,j}$ in \eqref{eq:P}, we have
\[
\sum_{i=1}^n p_{i,j} = \sum_{i=1}^n \frac{w'_{j,i}}{\sum\limits_{k \in \text{Out}(v_j)} w'_{j,k}}= \frac{\sum_{i=1}^n w'_{j,i}}{\sum\limits_{k \in \text{Out}(v_j)} w'_{j,k}}=1,
\]
where the results above follow from the fact that
\begin{equation*}
\sum_{i=1}^n w'_{j,i} = \sum_{k \in \text{Out}(v_j)} w'_{j,k}. 
\end{equation*}
Since each element $p_{i,j}$ is non-negative and the sum of each column is $1$, $\mathbf{P}$ is column-stochastic. This ensures that $\mathbf{P}$ can serve as a valid transition matrix for a Markov chain.

We now define the transition matrix $\mathbf{P}'$ for the Markov chain as
\begin{align}
\label{eq:P'}
    \mathbf{P}'=d\cdot \mathbf{P}+(1-d)\cdot \mathbf{E},
\end{align}
where $\mathbf{E}=ue^T$ is a matrix in which each column vector is $u$ and $e$ is the all-ones vector. Thus, each column of $\mathbf{E}$ sums to $1$. Additionally, since $\mathbf{P}'$ is an affine combination of two stochastic matrices, $\mathbf{P}'$ itself is also a stochastic matrix. The power iteration can then be expressed as
\begin{align}
\label{eq:tst_markov}
    \text{TST}^{(t+1)}=\mathbf{P}'\cdot \text{TST}^{(t)}.
\end{align}

\begin{theorem}\label{thm:1}
The Markov chain defined by the transition matrix $\mathbf{P}'$ is irreducible and aperiodic, and therefore converges to a unique stationary distribution with strictly positive entries.  
\end{theorem}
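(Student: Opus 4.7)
The plan is to show that $\mathbf{P}'$ is entrywise strictly positive; once that is in hand, irreducibility, aperiodicity, and convergence to a unique positive stationary distribution follow at once from the standard convergence theorem for finite ergodic Markov chains (equivalently, Perron-Frobenius applied to a primitive stochastic matrix). So the entire problem reduces to a componentwise positivity argument for $\mathbf{P}'$.

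First I would unpack $\mathbf{E} = u e^T$ and observe that row $i$ of $\mathbf{E}$ is the constant vector with entry $R(v_i)$. So the strict positivity of $\mathbf{E}$ reduces to showing $R(v_i) > 0$ for every $i$. From the definition of $R(v_i)$ in Section~\ref{sec:tst}, this is $\sum_{k=1}^{K} \beta_k \, t_{i,k} > 0$, which holds because $\beta_k \in \{1, \alpha\}$ with $\alpha = 1.5 > 0$, and because the LDA topic vector $t_{i,\cdot}$ lies on the probability simplex ($\sum_k t_{i,k} = 1$), forcing at least one positive coordinate. Then, using $d \in (0,1)$, I would combine $(1-d)\mathbf{E} > 0$ with the non-negativity of $d\mathbf{P}$ (whose column-stochasticity was already established in the preceding paragraphs of the paper) to conclude $\mathbf{P}' > 0$ componentwise.

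From $\mathbf{P}' > 0$ the remaining conclusions are immediate. Strict positivity of every entry means every state reaches every other state in one step, giving irreducibility, and every state has a strictly positive self-loop $P'_{ii} > 0$, giving aperiodicity (period $1$). Invoking the convergence theorem for finite, irreducible, aperiodic Markov chains then yields a unique stationary distribution $\pi$ with strictly positive entries, to which the power iteration $\text{TST}^{(t+1)} = \mathbf{P}' \, \text{TST}^{(t)}$ in \eqref{eq:tst_markov} converges from any initial probability vector.

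The main obstacle, and really the only substantive one, is the verification that $R(v_i) > 0$ for every $i$: this is where the argument could conceivably fail if an LDA topic distribution vanished on every weighted topic, or if $\alpha$ were permitted to be zero. A secondary technical nuisance is the treatment of dangling vertices $v_j$ with $\sum_{k \in \mathrm{Out}(v_j)} w'_{j,k} = 0$, for which \eqref{eq:P} is not directly defined; the standard PageRank remedy of uniform redistribution over such columns restores column-stochasticity of $\mathbf{P}$ without disturbing the positivity conclusion for $\mathbf{P}'$.
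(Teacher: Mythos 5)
Your proposal is correct and follows essentially the same route as the paper: both arguments observe that the teleportation term $(1-d)\mathbf{E}$, with $R(v_i)>0$, makes $\mathbf{P}'$ entrywise positive, hence irreducible (one-step communication) and aperiodic (positive diagonal), and then invoke Perron--Frobenius for the unique positive stationary distribution. You are a bit more careful than the paper in actually deriving $R(v_i)>0$ from $\beta_k \ge 1 > 0$ together with the simplex constraint on the LDA topic vector, and in flagging the dangling-vertex caveat for \eqref{eq:P}, but the core argument is the same.
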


\begin{proof}
For the transition matrix defined in \eqref{eq:tst_markov}, each state can communicate with every other state, as $R(v_i)$ is strictly positive and ensures that $(1-d)\cdot \mathbf{E}$ provides a positive probability of moving from any state to any other in a single step. Thus, the Markov chain is irreducible. In addition, the probability of remaining in the same state (from $v_i$ to $v_i$) is non-zero, guaranteeing that every state has the possibility of returning to itself at each time step. This implies that the period of every state is $1$, thereby establishing aperiodicity. According to the Perron-Frobenius theorem, since $\mathbf{P}'$ is a non-negative, irreducible, and aperiodic matrix, it possesses a unique largest eigenvalue $\lambda_1 =1$, with all other eigenvalues $\lambda_i$ having magnitudes strictly less than $1$. The eigenvector corresponding to $\lambda_1$ is unique and contains strictly positive entries, representing the stationary distribution.
\end{proof}

Theorem \ref{thm:1} ensures the convergence of TST. Let $\mathbf{TST}^*$ represent the unique stationary distribution of \eqref{eq:tst_markov}. Then, we present the following theorem, which characterizes the speed of the convergence of TST.

\begin{theorem}\label{thm:2}
Let $\mathbf{P}$ be defined as in \eqref{eq:P}. Then, the convergence rate of the Markov chain in \eqref{eq:tst_markov} can be characterized using $\mathbf{P}$ as:
\begin{align*}
\| \mathbf{TST}^{(t)} - \mathbf{TST}^* \| < C \cdot d^t \cdot |\lambda_2(\mathbf{P})|^t,
\end{align*} 
where $C$ is a constant determined by the initial distribution.
\end{theorem}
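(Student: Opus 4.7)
The plan is to control the rate at which the power iteration $\mathbf{TST}^{(t+1)}=\mathbf{P}'\mathbf{TST}^{(t)}$ approaches its unique stationary distribution $\mathbf{TST}^*$ (whose existence is guaranteed by Theorem~\ref{thm:1}) through the spectrum of $\mathbf{P}'$, and then to relate the spectrum of $\mathbf{P}'$ back to that of $\mathbf{P}$ via the rank-one structure of the teleportation term. Because $\lambda_1(\mathbf{P}')=1$ is simple and strictly dominant, expanding the error $\mathbf{TST}^{(t)}-\mathbf{TST}^*$ in a (generalized) eigenbasis of $\mathbf{P}'$ and applying the triangle inequality gives a bound of the form $\|\mathbf{TST}^{(t)}-\mathbf{TST}^*\|\le C\,|\lambda_2(\mathbf{P}')|^t$, in which the constant $C$ depends only on the initial distribution.

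The heart of the argument is the identity $|\lambda_2(\mathbf{P}')|=d\,|\lambda_2(\mathbf{P})|$. Since $\mathbf{P}$ is column-stochastic, $\mathbf{e}^T$ is a left eigenvector of $\mathbf{P}$ with eigenvalue $1$, and by biorthogonality every right eigenvector $\mathbf{x}_i$ of $\mathbf{P}$ associated with $\lambda_i\neq 1$ satisfies $\mathbf{e}^T\mathbf{x}_i=0$. Invoking $\mathbf{P}'=d\mathbf{P}+(1-d)\mathbf{u}\mathbf{e}^T$ from \eqref{eq:P'}, I would then compute
\[
\mathbf{P}'\mathbf{x}_i=d\mathbf{P}\mathbf{x}_i+(1-d)\mathbf{u}(\mathbf{e}^T\mathbf{x}_i)=d\lambda_i\mathbf{x}_i,
\]
so that $\mathbf{x}_i$ is a right eigenvector of $\mathbf{P}'$ with eigenvalue $d\lambda_i$. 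Together with the dominant eigenpair $(1,\mathbf{TST}^*)$, this exhausts the spectrum of $\mathbf{P}'$; in particular, its subdominant eigenvalue has modulus $d\,|\lambda_2(\mathbf{P})|$, and substituting into the previous bound yields
\[
\|\mathbf{TST}^{(t)}-\mathbf{TST}^*\|<C\,d^t\,|\lambda_2(\mathbf{P})|^t,
\]
which is the claim.

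The main obstacle is that $\mathbf{P}$ need not be diagonalizable, so the eigenvector argument above has to be lifted to Jordan form: each Jordan block of $\mathbf{P}$ at $\lambda_i\neq 1$ must be shown to correspond to a Jordan block of $\mathbf{P}'$ at $d\lambda_i$, which contributes at most a polynomial-in-$t$ prefactor that can be absorbed into $C$, with the strict inequality preserved by taking $C$ a hair larger than the tight spectral constant. A secondary technical point worth flagging is the biorthogonality step $\mathbf{e}^T\mathbf{x}_i=0$, which does not hold for arbitrary non-normal matrices and relies specifically on the column-stochasticity of $\mathbf{P}$ established just before the statement of Theorem~\ref{thm:1}.
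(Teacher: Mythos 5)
Your overall strategy matches the paper's: both arguments reduce the claim to a spectral-gap bound for the power iteration under $\mathbf{P}'$ and then show that the subdominant eigenvalue of $\mathbf{P}'$ equals $d\,\lambda_2(\mathbf{P})$. The difference is how that spectral identity is obtained. The paper simply cites an eigenvalue-perturbation theorem (for a convex combination of a column-stochastic $\mathbf{P}$ and a rank-one stochastic $\mathbf{E}=\mathbf{u}\mathbf{e}^T$, the spectrum becomes $\{1, d\lambda_2,\ldots,d\lambda_n\}$), whereas you prove it from scratch via the observation that $\mathbf{e}^T$ is a left eigenvector of $\mathbf{P}$ at eigenvalue $1$, hence $\mathbf{e}^T\mathbf{x}_i=0$ for any right eigenvector $\mathbf{x}_i$ with $\lambda_i\neq 1$, so $\mathbf{P}'\mathbf{x}_i=d\lambda_i\mathbf{x}_i$. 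That direct derivation is clean, self-contained, and is in fact the standard proof of the cited result (it is the classical ``Google matrix'' eigenvalue lemma), so you have essentially reproved the black box the paper invokes. What the paper's route buys is brevity; what yours buys is transparency and no external dependency.

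One substantive flaw in your write-up: the claim that a Jordan block of size $k>1$ ``contributes at most a polynomial-in-$t$ prefactor that can be absorbed into $C$'' is incorrect. The Jordan-form error term is of order $t^{k-1}\,(d|\lambda_2|)^t$, and $t^{k-1}(d|\lambda_2|)^t \le C\,(d|\lambda_2|)^t$ fails for every fixed $C$ since $t^{k-1}\to\infty$. The honest options are (i) to assume $\lambda_2(\mathbf{P})$ is semisimple, in which case the stated bound holds; (ii) to keep the polynomial prefactor $t^{k-1}$ in the inequality; or (iii) to replace the base by $(d|\lambda_2(\mathbf{P})|+\varepsilon)$ for an arbitrary $\varepsilon>0$ with a constant $C_\varepsilon$. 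To be fair, the paper's own proof inherits the same implicit semisimplicity assumption from its citation, so this is not a defect relative to the paper's argument; but your proposed patch as written does not close the gap you correctly identified. Your biorthogonality step ($\mathbf{e}^T\mathbf{x}_i=0$, derived from column-stochasticity) is correct and is indeed the linchpin of the argument.
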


\begin{proof}
The convergence of the TST algorithm is governed by the spectral properties of its transition matrix. Let $\mathbf{P}'$ be the transition matrix defined in \eqref{eq:P'}, where $\mathbf{P}$ is the original column-stochastic matrix, $\mathbf{E}=\mathbf{u}e^T$ is a rank-one stochastic matrix corresponding to topic relevance scores, and $d$ is the damping factor. The eigenvalues of $\mathbf{P}'$ can be characterized using the theorem in eigenvalue perturbation theory \cite{DING20071223}. Specifically, if $\mathbf{P}$ is a column-stochastic matrix with eigenvalues $1, \lambda_2, \ldots, \lambda_n$ and $\mathbf{E}$ is a rank-one stochastic matrix with eigenvalues $1, 0, \ldots, 0$, then the eigenvalues of $\mathbf{P}'$ are given by $1, d \lambda_2, d \lambda_3, \ldots, d \lambda_n$. Therefore, the second-largest eigenvalue of $\mathbf{P}'$ is $\lambda_2(\mathbf{P}') = d \lambda_2(\mathbf{P})$.

The convergence rate of the TST algorithm can be assessed by examining the difference between the iteratively computed TST vector $\mathbf{TST}^{(t)}$ and the stationary distribution \(\mathbf{TST}^*\). From \cite{MC_Levin}, we have:
\begin{align*}
\| \mathbf{TST}^{(t)} - \mathbf{TST}^* \| \leq C \cdot (|\lambda_2(\mathbf{P}')|)^t = C \cdot (d |\lambda_2(\mathbf{P})|)^t,
\end{align*}
where $C$ is a constant that depends on the initial distribution. This result indicates that the convergence rate is exponentially related to the second-largest eigenvalue $\lambda_2(\mathbf{P}')$ of the transition matrix $\mathbf{P}'$. For example, for a typical damping factor $d = 0.85$, we have
\begin{align*}
\| \mathbf{TST}^{(t)} - \mathbf{TST}^* \| < C \cdot 0.85^t \cdot |\lambda_2(\mathbf{P})|^t.
\end{align*}

According to the Perron-Frobenius theorem, the largest eigenvalue of a column-stochastic matrix is $1$. Additionally, based on the graph spectral properties, the second largest eigenvalue of a connected graph is strictly less than $1$. Importantly, when $|\lambda_2(\mathbf{P})|$ is significantly less than $1$, the algorithm converges more rapidly.
\end{proof}

Theorems \ref{thm:1} and \ref{thm:2} highlight the importance of the topic-specific teleportation term $(1-d)\mathbf{E}$ in the transition matrix $\mathbf{P}'$, which ensures that the Markov chain is irreducible and aperiodic, thus guaranteeing the existence of a unique stationary distribution. However, the convergence rate of the TST algorithm is mainly determined by the second-largest eigenvalue of $\mathbf{P}'$. Specifically, this eigenvalue is given by $\lambda_2(\mathbf{P}') = d \lambda_2(\mathbf{P})$, where $\lambda_2(\mathbf{P})$ denotes the second-largest eigenvalue of the original transition matrix $\mathbf{P}$. Thus, while the term $(1-d)\mathbf{E}$ ensures the convergence of the Markov chain, the convergence rate is primarily governed by the magnitude of $\lambda_2(\mathbf{P})$ and the damping factor $d$.

To validate these results, we conduct numerical simulations on a graph where the second-largest eigenvalue of the original transition matrix $\mathbf{P}$ is $\lambda_2(\mathbf{P}) = 0.9899$. We examine the convergence speed of the TST algorithm with various damping factors $d$. Theoretical convergence factors, given by $d \cdot |\lambda_2(\mathbf{P})|$, are calculated and annotated in Fig. \ref{fig:d_vs_speed}.

\begin{figure}
    \centering
    \includegraphics[width=\linewidth]{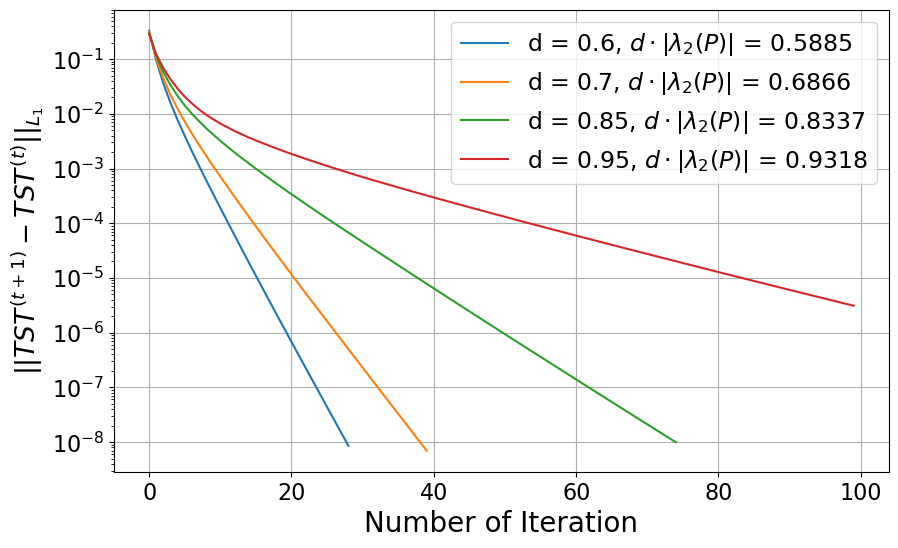}
    \caption{Illustration of the impact of the damping factor and $\lambda_2(\mathbf{P})$ on the convergence speed of (\ref{eq:tst_markov}). The $y$-axis is displayed on a log scale.}
    \label{fig:d_vs_speed}
\end{figure}

The simulation is conducted on a randomly generated graph using the Watts-Strogatz model with parameters $(k,n)=(4,500)$. The numerical results show that with $d = 0.6$, the algorithm converges after $30$ iterations, whereas with $d = 0.95$, it fails to converge within $100$ iterations. These findings confirm that as the damping factor $d$ increases, the convergence factor $d \cdot |\lambda_2(\mathbf{P})|$ approaches 1, leading to a slower convergence rate. This aligns with Theorem \ref{thm:2}, which illustrates that the convergence rate is indeed primarily affected by the magnitude of $\lambda_2(\mathbf{P})$ and the damping factor $d$. Therefore, although the term $(1-d)\mathbf{E}$ ensures the convergence of the Markov chain by making it irreducible and aperiodic, the actual speed of convergence is governed by $d \lambda_2(\mathbf{P})$.

In practice, we set the damping factor to $d = 0.85$, which is the default setting in PageRank. The iterative power method terminates once the change in the TST vector falls below $10^{-6}$ in the $\ell_1$ norm. We set the maximum number of iterations to $500$ to ensure convergence, and the initial TST vector is uniformly initialized, i.e., $\mathbf{TST}^{(0)}(v_i) = \frac{1}{n}$ for $i = 1, \ldots, n$.

\subsection{Large Language Model with Few-Shot Learning}
LLMs have become a fundamental part of NLP, offering impressive performance in tasks like text generation, sentiment analysis, translation, and question-answering. Trained on extensive datasets, these models use deep learning architectures to effectively process and interpret various aspects of language, making them highly useful in diverse NLP applications. In the context of knowledge graphs, LLMs can understand and analyze substantial amounts of text to identify relevant connections and information necessary for creating detailed and accurate knowledge graphs. In TrumorGPT, we particularly use Generative Pre-trained Transformer 4 (GPT-4) \cite{openai2023gpt4} to facilitate the fact-checking process.

To enable TrumorGPT to build accurate semantic health knowledge graphs, we instruct GPT-4 to leverage the algorithms for computing topic-enhanced sentence centrality and topic-specific TextRank proposed in Sections \ref{sec:tesc} and \ref{sec:tst}. However, GPT-4 may struggle with mathematical concepts, making it challenging to comprehend and accurately implement algorithms involving complex math, which can lead to errors in the generated knowledge graphs. To mitigate this issue, we leverage the Advanced Data Analysis (ADA) capabilities of GPT-4 by instructing GPT-4 to use Python to implement the proposed algorithms for constructing knowledge graphs and then displaying the results. As previous studies have demonstrated the efficiency and effectiveness of LLMs for graph learning \cite{liu2024can, chen2024exploring}, using ADA with GPT-4 is well-suited for understanding and implementing the rigorous mathematical computations of the proposed graph algorithms for constructing knowledge graphs. Combining these advanced graph algorithms with the language processing capabilities of GPT-4 ensures that the constructed knowledge graphs are both precise and contextually accurate, leading to improved performance in fact-checking. This hybrid approach leverages the strengths of both components, potentially achieving better results than relying on either the graph-based or language-based methods alone.

It is challenging for GPT-4 to just learn algorithms to construct accurate semantic health knowledge graphs from scratch. To address this, TrumorGPT employs few-shot learning to quickly adapt and generalize from a small number of examples. Suppose we are given a set of $N$ examples $\{(x_i, G_i)\}^N_{i=1}$, where $x_i$ denotes a query, either an article or a single sentence, and $G_i=\{E_i,R_i,F_i\}$ represents the corresponding optimized semantic health knowledge graph constructed from $x_i$ (see Section \ref{sec:knowledge_G}). Our goal is to develop task-agnostic learning strategies for GPT-4 that generalize well to unseen scenarios. Given a new input query $\hat{x}_i$, the objective is to accurately build the corresponding knowledge graph $\hat{G}_i$. As such, we first instruct GPT-4 on how to leverage the algorithms to optimize the process of creating accurate and contextually relevant knowledge graphs. Then, we provide a few examples with input queries and their designated knowledge graphs to demonstrate how this process can be effectively executed. Meanwhile, during the training stage, we can continually fine-tune the model to improve its performance by providing more precise and detailed instructions.

Through the few-shot learning approach with the topic-enhanced sentence centrality and topic-specific TextRank algorithms, GPT-4 enables TrumorGPT to identify and extract key phrases and sentences. These elements highlight the central ideas within the text, serving as the building blocks for the semantic health knowledge graph. By organizing these elements into a graph structure, TrumorGPT can represent the relationships and hierarchies among the various pieces of information. This approach is particularly necessary when dealing with extensive user inputs, such as full-length articles rather than single-sentence queries. It allows TrumorGPT to quickly understand and adapt to the task of knowledge graph construction by learning from a small, representative set of examples. These examples train the framework to identify the core ideas from articles, which is an important skill when the input is lengthy and complex. By optimizing the learning curve of the GPT-4, TrumorGPT can efficiently generalize the process of semantic health knowledge graph construction to new and diverse inputs, eliminating the need for exhaustive retraining on large datasets while ensuring the knowledge graph remains focused and relevant, even when derived from extensive sources of text.

\subsection{Graph-Based Retrieval-Augmented Generation}
Due to the non-deterministic behavior of GPT-4, it is difficult to predict what will be generated. Even with clear instructions, the generated result may not always meet the desired outcome. GraphRAG fundamentally enhances the fact-checking process in the TrumorGPT framework by augmenting the capabilities of GPT-4. The technical function of GraphRAG lies in its ability to dynamically access and incorporate external data, functioning like a database of semantic health knowledge graphs for fact-checking. GPT-4 serves as the foundation for conducting semantic similarity analysis in TrumorGPT, which compares user queries against a large corpus of text to identify relevant information in GraphRAG. This process is critical for ensuring that the semantic knowledge graph reflects the content of the user query accurately. Given a user input query $x$ and an external knowledge base consisting of $N$ knowledge graphs $\{G_i\}^N_{i=1}$, the goal of GraphRAG in TrumorGPT is to verify $x$ based on $\{G_i\}^N_{i=1}$. That is, if TrumorGPT is viewed as a function, then we have
\begin{equation*}
\text{TrumorGPT}(x,\{G_i\}^N_{i=1})=y,
\end{equation*}
where $y\in \{\text{True},\text{False}, \text{Undetermined}\}.$

To achieve the task, the user query $x$ is first transformed into a query knowledge graph $G_x = \{ E_x, R_x, F_x \}$ by identifying the relevant entities and relationships contained in the query. Next, a set of knowledge graphs $\{G_i\}^N_{i=1}$ is retrieved from the database based on their semantic similarity to $G_x$. This retrieval process relies on semantic similarity measures between the query graph $G_x$ and each knowledge graph $G_i$ in the database. The similarity score $S(G_x, G_i)$ can be computed using various graph similarity metrics, such as graph edit distance, subgraph isomorphism, or embedding-based approaches. Finally, TrumorGPT determines the answer based on the similarity scores. For instance, if there exists at least one knowledge graph $G_i$, $1\leq i\leq N$, such that $S(G_x, G_i)\leq \theta$, where $\theta$ is a threshold, then TrumorGPT outputs ``True''. This approach integrates the structured representation of semantic knowledge graphs with the generative capabilities of GPT-4, enhanced by GraphRAG, to deliver accurate and contextually grounded answers to health-related queries. In Fig. \ref{fig:graphrag}, we illustrate the workflow of GraphRAG in TrumorGPT.

\begin{figure}[tbp]
    \centering
    \includegraphics[width=\linewidth]{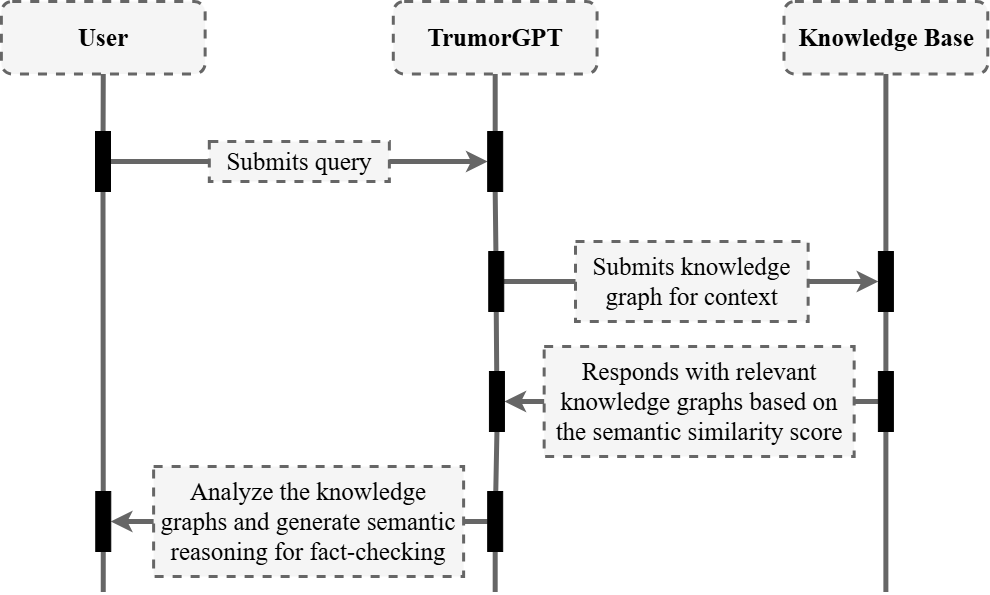}
    \caption{A sequence diagram illustrating the workflow of GraphRAG.}
    \label{fig:graphrag}
\end{figure}

In TrumorGPT, we integrate subgraph isomorphism and Jaccard similarity to define a scoring function $S(G_x, G_i)$ that quantifies the similarity between $G_x$ and $G_i$. We first define the set of consecutive triples extracted from a graph $G_i$ as:
$$
T_i = \{\, (e_1, r, e_2) \mid \text{there exists a path } e_1 \xrightarrow{r} e_2 \text{ in } G_i \,\}.
$$
To compare $G_x$ and $G_i$, we evaluate the overlap between their sets of consecutive triples. Let $f(t)$ denote a weighting function for a triple $t$ (e.g., derived from the centrality scores of the entities and relations in $t$). The similarity score $S(G_x, G_i)$ is then defined as:
\begin{align}
\label{eq:sim_score}
    S(G_x, G_i) = \frac{\sum_{t \in T_x \cap T_i} f(t)}{\sum_{t \in T_x \cup T_i} f(t)}.
\end{align}
In the simplest case, if we set $f(t)=1$ for all $t$ (i.e., treating every consecutive triple as equally important), (\ref{eq:sim_score}) reduces to the standard Jaccard similarity:
$$
S(G_x, G_i) = \frac{|T_x \cap T_i|}{|T_x \cup T_i|}.
$$

For illustration, consider a user query from which the following triples are extracted:
$$
T_x = \{\, (\text{Moderna},\, \text{prevents},\, \text{COVID}),\; (\text{Pfizer},\, \text{treats},\, \text{cancer}) \,\}, 
$$
and we have the following triples from a relevant semantic health knowledge graph $G_i$ in the knowledge base:
$$
T_i = \{\, (\text{Moderna},\, \text{prevents},\, \text{COVID}),\; (\text{A.Z.},\, \text{prevents},\, \text{flu}) \,\}.
$$
Then, we have
$$
T_x \cap T_i = \{\, (\text{Moderna},\, \text{prevents},\, \text{COVID}) \,\},
$$
and 
\begin{align*}
T_x \cup T_i = &\{\, (\text{Moderna},\, \text{prevents},\, \text{COVID}),\\ 
&(\text{Pfizer},\, \text{treats},\, \text{cancer}),\\ 
&(\text{A.Z.},\, \text{prevents},\, \text{flu}) \,\}.    
\end{align*}
Thus, the similarity score is:
$$
S(G_x, G_i) = \frac{1}{3} \approx 0.33.
$$

This scoring function quantifies the proportion of consecutive triples shared between the two graphs relative to the total number of unique triples. By incorporating a weighting function $f(t)$, the score can be refined to prioritize more important triples, such as those extracted from sentences with higher centrality or TST scores. Thus, it can improve the accuracy and reliability of the fact-checking process within our GraphRAG framework.

The hallucination issue in GPT-4 refers to instances where the model generates responses that are factually incorrect or not grounded in reality, often due to its dependence on static training data that may not be up-to-date or comprehensive. GraphRAG can be employed as a solution to mitigate this problem in TrumorGPT. GraphRAG allows GPT-4 to access an updated knowledge base that includes the latest news and information, all organized in the form of structured semantic knowledge graphs. In particular, TrumorGPT can be technically enhanced by GraphRAG through the following tasks:
\begin{itemize}
    \item Query processing and retrieval: Upon receiving a user query, TrumorGPT first processes it to understand the context and intent. With GraphRAG, the framework then searches the knowledge base, composed of semantic health knowledge graphs that encapsulate the most recent medical and health knowledge in a structured format, to find health information related to the query.
    \item Reference and verification: If the knowledge base contains graphs that directly correspond to the input query, TrumorGPT leverages this information to verify the truthfulness of the query. This process of direct verification depends on how relevant and recent the knowledge graphs are in relation to the query.
    \item Knowledge graph expansion: In situations where the knowledge base lacks a specific knowledge graph for direct verification, TrumorGPT enhances the existing graph related to the input query by adding new vertices and edges, integrating additional information and context. This expansion broadens the scope of the knowledge graph for a more comprehensive fact-checking process.
    \item Guidance for decision making: If there is not enough information for exact fact-checking, TrumorGPT provides additional, relevant knowledge from the knowledge base. This information acts as a guide for users, helping them make decisions regarding the veracity of their query.
\end{itemize}

GraphRAG effectively equips GPT-4 with the ability to reference the most current and relevant data, going beyond its pretrained static dataset. This dynamic interaction with an evolving knowledge base significantly reduces instances of factual inaccuracies in the responses generated by GPT-4, making TrumorGPT a more reliable and up-to-date fact-checking tool. If the input query can be fact-checked using the semantic health knowledge graphs in GraphRAG as false or true, then TrumorGPT provides evidence based on these knowledge graphs to support the claim (true or false) with semantic reasoning. If the query is undetermined, TrumorGPT offers relevant information from its knowledge base, which is also constructed from semantic health knowledge graphs, along with semantic reasoning, enabling the user to decide the trustworthiness of the query.

\section{Performance Evaluation}
\label{sec:exp}
In this section, we examine the effectiveness of our TrumorGPT framework in validating the truthfulness of health news content as an application of fact-checking.

\subsection{Experimental Setup}
We enhance TrumorGPT with up-to-date knowledge by incorporating Resource Description Framework (RDF) triples from the latest-core collection of DBpedia \cite{auer2007dbpedia}, which automatically extracts structured data from Wikipedia reflecting the most recent updates. Focusing on the DBpedia Ontology dataset of triples, we construct a knowledge base with information exclusively in English. Given that the existing pretraining of GPT-4 includes knowledge up to December 2023, we concentrate on training it with data postdating this period to ensure knowledge of TrumorGPT remains updated. To address the limitations of GPT-4 in processing extensive external data, we selectively train it on triples related to health information, ensuring relevance and efficiency of the framework in a domain where accuracy is specifically consequential due to its impact on public health and safety. In particular, we filter the dataset using labels to include only resources with health-related keywords such as ``health'', ``medical'', ``medicine'', ``disease'', ``pandemic'', and ``epidemic'' (string matching is case-insensitive).

To evaluate the fact-checking performance of our proposed TrumorGPT on public health claims, we compare it with four advanced language models: GPT-3.5 \cite{brown2020language, ouyang2022training}, GPT-4, LLaMA 3.2 \cite{touvron2023llama}, PaLM 2 \cite{anil2023palm}, Claude 3.5 Sonnet, and Gemini 1.5 \cite{reid2024gemini}. We use widely adopted evaluation metrics, including accuracy, precision, recall, and F1-score, to assess fact-checking performance. In our evaluations, true positives and true negatives denote correctly identified claims, while false positives and false negatives represent misclassifications. Accuracy reflects the overall proportion of correct predictions, precision measures the correctness of positive predictions, recall indicates the ability of the model to identify true claims, and F1-score balances precision and recall.

\subsection{Evaluation Approach}
As the 2024 United States presidential election approaches, our focus in this study is on health policy and political news, recognizing the critical role of fact-checking in ensuring informed voter decisions and maintaining the integrity of the electoral process. Fact-checking is important before an election as it helps to clarify the positions and claims of candidates, particularly regarding health policies, contributing to a more transparent and fair political discourse. Health policy is a crucial topic because it directly impacts public well-being, healthcare access, and the management of health crises, such as the COVID-19 pandemic. Misinformation or unclear statements about health policies can lead to significant public confusion and potentially harmful decisions. By focusing on health policy news, we aim to ensure voters are accurately informed about the stances of candidates on critical health issues, highlighting the importance of transparent, evidence based discussions in the political arena, particularly on matters that have far-reaching consequences for public health.

In this regard, we focus on PolitiFact, a well-known fact-checking organization that evaluates the accuracy of claims made by politicians, candidates, and others involved in United States politics. While PolitiFact covers a wide range of political statements, we particularly concentrate on issues related to health care, evaluating the politics and policies affecting public health in the United States. PolitiFact conducts fact-checking by researching statements and rating their accuracy, from policy debates to campaign assertions. Due to its comprehensive and methodical approach to evaluating political statements, we consider the fact-checking outcomes by PolitiFact as our gold standard for assessing the performance of TrumorGPT. The well-established framework of PolitiFact provides a robust benchmark for comparing and validating the fact-checking capabilities of TrumorGPT in the area of political news on health issues, especially in the context of the upcoming United States presidential election.

PolitiFact employs a six-category rating system to evaluate the accuracy of statements: ``True'' for statements that are completely accurate with no significant omissions, ``Mostly True'' for statements that are accurate but require additional information or clarification, ``Half True'' for partially accurate statements that omit important details or take things out of context, ``Mostly False'' for statements containing some truth but lacking critical facts, ``False'' for completely inaccurate statements, and ``Pants on Fire'' for not only false but also ridiculous claims. LLMs may struggle with the fine distinctions between these categories. Therefore, we simplify these into a binary system of ``True'' or ``False'' for practicality. We categorize ``True'', ``Mostly True'', and ``Half True'' as ``True'', indicating a tendency towards accuracy, and ``Mostly False'', ``False'', and ``Pants on Fire'' as ``False'', denoting significant inaccuracy or outright falsehood. This binary classification method simplifies fact-checking into just true or false categories, which is important for ensuring both accuracy and speed in the verification of information.

\subsection{Results}

We compare the performance of TrumorGPT with five other advanced language models using a set of 600 statements from the ``Health Care'' and ``Coronavirus'' categories on PolitiFact, evenly split with 300 true and 300 false claims. As shown in Table \ref{tab:acc_per}, all models achieve stable accuracy above 70\% in fact-checking health-related claims, with TrumorGPT standing out as the most accurate at 88.5\%. Meanwhile, the precision, recall, and F1-score for all LLMs indicate a generally robust performance in fact-checking health-related claims, with TrumorGPT leading across all metrics. An analysis of these evaluation metrics further reveals that for all LLMs, precision exceeds accuracy, which in turn is higher than recall, and the F1-score is the lowest. This ordering suggests that while the LLMs are highly effective at ensuring that positive predictions are correct, they tend to miss some true positive cases, resulting in an F1-score that reflects this trade-off. 

Although GPT-4 demonstrates competitive performance, TrumorGPT outperforms it by building on its foundation with specialized health-specific enhancements. In particular, training data of GPT-4 only extends to December 2023, limiting its awareness of recent developments, while TrumorGPT augments this with current knowledge through semantic health knowledge graphs in GraphRAG, providing access to the latest health information and policies. Additionally, other general-purpose LLMs in this evaluation lack the domain-specific knowledge that TrumorGPT possesses in health and medical misinformation. This specialization enables TrumorGPT to better identify nuances in health-related claims, improving its accuracy and reliability in fact-checking health information.

\begin{table}[tb]
\centering
\renewcommand{\arraystretch}{1.5}
\caption{Performance Comparison of TrumorGPT and Other LLMs for Fact-Checking Health-Related Claims}
\begin{tabular}{lcccc}
\hline
LLMs & Accuracy & Precision & Recall & F1-score \\
\hline
GPT-3.5 & 72.7\% & 75.8\% & 66.7\% & 70.9\% \\
GPT-4 & 83.3\% & 85.7\% & 80.0\% & 82.8\% \\
LLaMA 3.2 & 81.8\% & 83.0\% & 80.0\% & 81.5\% \\
PaLM 2 & 76.8\% & 77.9\% & 75.0\% & 76.4\% \\
Claude 3.5 Sonnet & 77.2\% & 78.4\% & 75.0\% & 76.7\% \\
Gemini 1.5 & 81.7\% & 83.9\% & 78.3\% & 81.0\% \\
TrumorGPT & \textbf{88.5\%} & \textbf{91.4\%} & \textbf{85.0\%} & \textbf{88.1\%} \\
\hline
\end{tabular}
\label{tab:acc_per}
\end{table}

Measuring the average number of sentences generated in fact-checking responses provides insights into the conciseness and clarity of each model. A more concise response with fewer sentences can indicate efficient information retrieval, making it easier for users to quickly understand the fact-checking result without sifting through excessive details. As shown in Fig. \ref{fig:avg_sen}, TrumorGPT has the lowest average sentence count at 2.8 sentences, suggesting that it is highly efficient in delivering clear, fact-based responses. We further examine the relationship between prediction accuracy and explanation length for fact-checking responses across different LLMs. Fig. \ref{fig:acc_sen} shows that the number of sentences per response for all LLMs falls within a range of 2 to 6 sentences. In Fig. \ref{fig:acc_words}, we explore the average word count per response. Notably, TrumorGPT achieves the highest accuracy while delivering the most concise explanations among the models. This advantage is due to the integration of semantic health knowledge graphs in TrumorGPT, enabling access to precise, current information. By focusing only on relevant details, TrumorGPT enhances accuracy without increasing response length, an especially valuable trait for fact-checking in public health domains where users seek quick and reliable information.

\begin{figure}[tb]
    \centering
    \includegraphics[width=\linewidth]{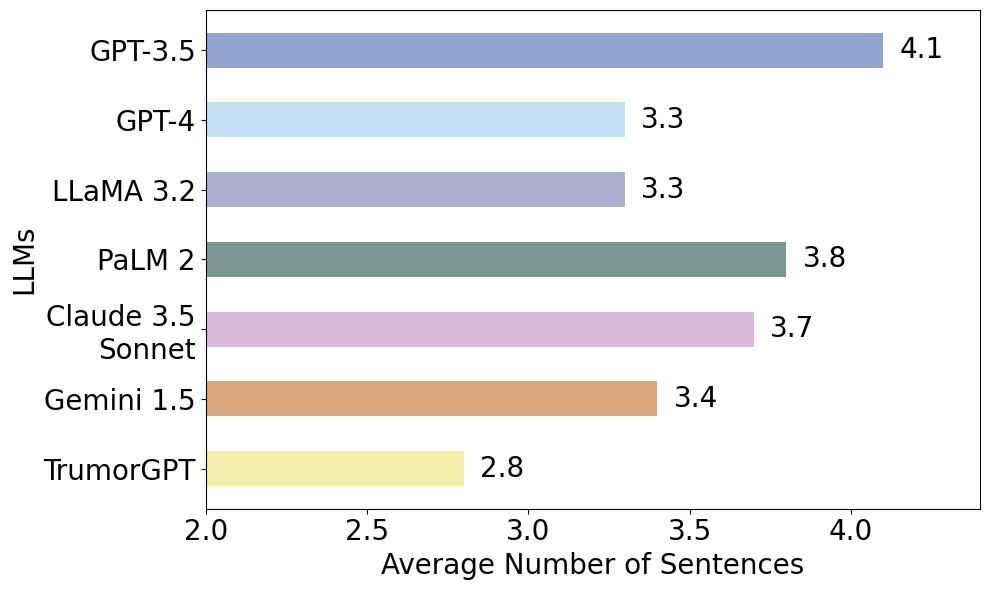}
    \caption{Comparison of average number of sentences generated by LLMs for fact-checking responses.}
    \label{fig:avg_sen}
\end{figure}

\begin{figure}[tp]
    \centering
    \subfigure[Comparison of fact-checking accuracy and number of sentences per response generated by various LLMs.]
    {
        \includegraphics[width=\linewidth]{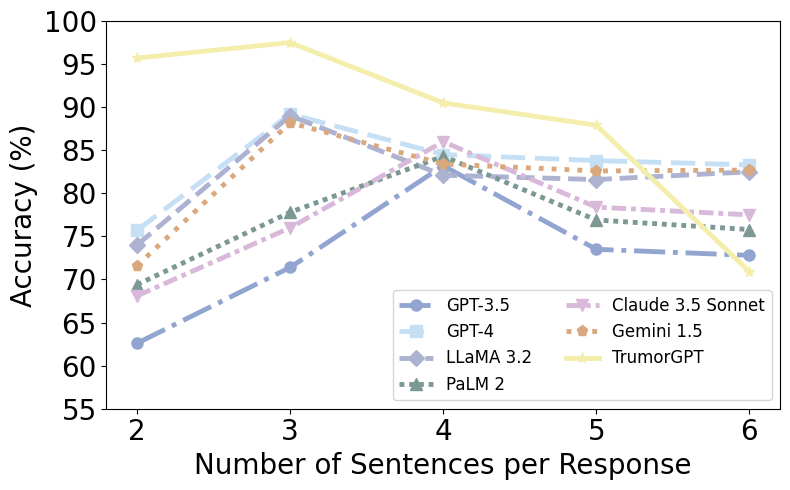}
        \label{fig:acc_sen}
    }
    \subfigure[Comparison of fact-checking accuracy and average word count per response generated by various LLMs.]
    {
        \includegraphics[width=\linewidth]{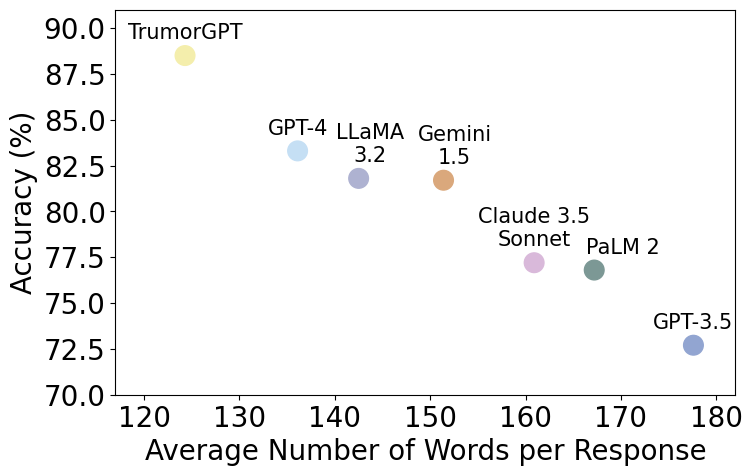}
        \label{fig:acc_words}
    }
    \caption{Correlation between prediction accuracy and explanation length in fact-checking responses across various LLMs.}
    \label{fig:acc_len}
\end{figure}

\subsection{TrumorGPT for Fact-Checking of Public Health Claims}
We specifically examine the performance and behavior of TrumorGPT in fact-checking public health claims. Fig. \ref{fig:confusion_mat} displays the normalized confusion matrix, illustrating the fact-checking performance of TrumorGPT. We observe that the model correctly identifies 85\% of true statements (true positives) and 92\% of false statements (true negatives), demonstrating its proficiency in fact-checking. Meanwhile, TrumorGPT achieves a slightly better performance in identifying false statements over true ones. One possible explanation for this could be the nature of the data or the way misinformation is structured. False statements often present specific, identifiable inaccuracies or exaggerations, which the model can detect using the semantic health knowledge graph. In contrast, verifying the truthfulness of a statement often requires a comprehensive understanding and cross-referencing of facts, which can be more challenging and may lead to a marginally lower true positive rate. This phenomenon reflects a common pattern in fact-checking, where disproving a statement is often more feasible than definitively proving its accuracy, given the more straightforward nature of evidence required to refute a falsehood as opposed to the extensive and sometimes ambiguous evidence needed to confirm the truth. 

\begin{figure}[tb]
    \centering
    \includegraphics[width=.9\linewidth]{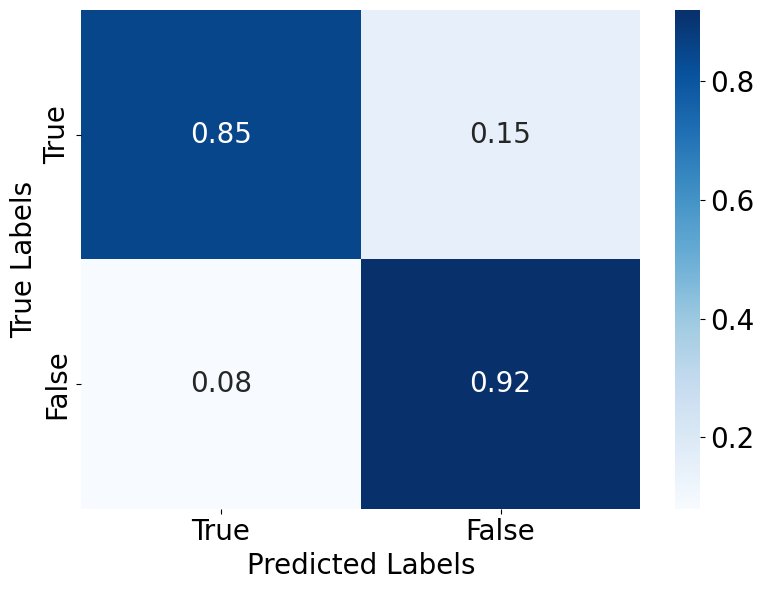}
    \caption{A normalized confusion matrix for visualizing the performance of TrumorGPT in binary fact-checking.}
    \label{fig:confusion_mat}
\end{figure}

We further evaluate TrumorGPT on fact-checking the 600 statements using the original six-category rating system employed by PolitiFact, with 100 claims per category. Fig. \ref{fig:confusion_mat2} shows the overall accuracy of TrumorGPT at 49.3\% across the six-category system, a notable reduction compared to its binary classification accuracy. TrumorGPT differentiates among the six categories by analyzing the completeness and quality of factual support in its semantic health knowledge graphs. When a claim is fully validated, with comprehensive connections among all relevant entities, it is classified as ``True''. If the graph supports the claim with nearly all critical details but lacks some clarifications, the claim is deemed ``Mostly True''. In contrast, if the graph shows that the claim is only partially supported, omitting several important details, it is labeled ``Half True''. When some elements of truth are present yet key facts are missing, the claim is categorized as ``Mostly False'', and if the graph entirely fails to support the claim, it is classified as ``False''. When the claim is not only unsupported but also contains absurd or exaggerated elements, it is rated ``Pants on Fire''.

Although the multi-class accuracy is significantly lower with this approach, we observe that TrumorGPT consistently classifies true statements within the ``True'', ``Mostly True'', or ``Half True'' categories, and false statements within the ``Mostly False'', ``False'', or ``Pants on Fire'' categories. It indicates that classifications of TrumorGPT are generally on the correct path, even if they are not perfectly aligned with the exact label. This is the reason why its overall accuracy for binary classification of true or false remains high at 88.5\%. A further trend reveals that when TrumorGPT encounters uncertainty, it often selects middle-ground categories like ``Half True'' or ``Mostly False''. This suggests that TrumorGPT takes a cautious approach, often leaning towards a generally correct direction rather than providing an exact answer. Therefore, TrumorGPT is optimized for binary classification, providing clear-cut true or false decisions, which is particularly valuable in today’s infodemic age where users benefit from straightforward and reliable fact-checking.

\begin{figure}[tb]
    \centering
    \includegraphics[width=\linewidth]{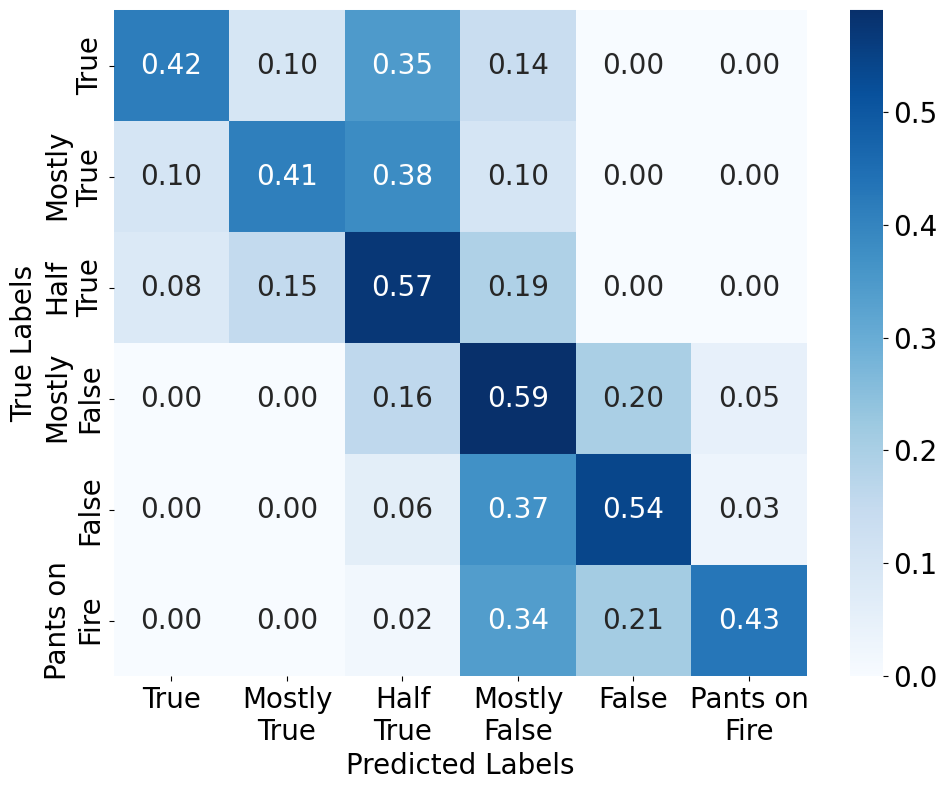}
    \caption{A normalized confusion matrix for visualizing the performance of TrumorGPT in multi-class fact-checking.}
    \label{fig:confusion_mat2}
\end{figure}

Another important aspect in our analysis is to evaluate how TrumorGPT leverages semantic health knowledge graphs in fact-checking public health claims. In previous simulations, it was observed that TrumorGPT achieves higher accuracy with shorter explanations, indicating that the use of semantic health knowledge graphs can effectively distill essential information for concise fact-checking responses. To further assess the performance of TrumorGPT in constructing these knowledge graphs, we select 50 claims from the 600 statements, evenly split between true and false claims. For each claim, PolitiFact provides an article of varying length to support or refute the statement. We modify these articles by either removing explicit fact-indicating information or altering keywords to introduce potential misdirection. Then, for each altered article, we instruct TrumorGPT to construct a semantic health knowledge graph and perform fact-checking. 

As shown in Fig. \ref{fig:kg_words}, the size of the knowledge graphs ranges from 6 to 18 vertices, with the number of vertices generally increasing linearly with the length of the article. This trend suggests that TrumorGPT constructs more complex graphs for longer articles to capture the additional details. However, as the article length increases, the error rate also rises. This is primarily due to the fact that with more content, TrumorGPT may struggle to identify key points, leading to confusion and potential hallucination in its responses. Therefore, while TrumorGPT effectively uses knowledge graphs for accurate fact-checking in shorter articles, longer articles may still require further refinement in the model to maintain accuracy. Table \ref{tab:example} presents two illustrative examples of TrumorGPT applied to fact-checking, detailing the complete outcomes, including the final semantic health knowledge graphs and the corresponding semantic reasoning generated by TrumorGPT.

\begin{figure}[tb]
    \centering
    \includegraphics[width=\linewidth]{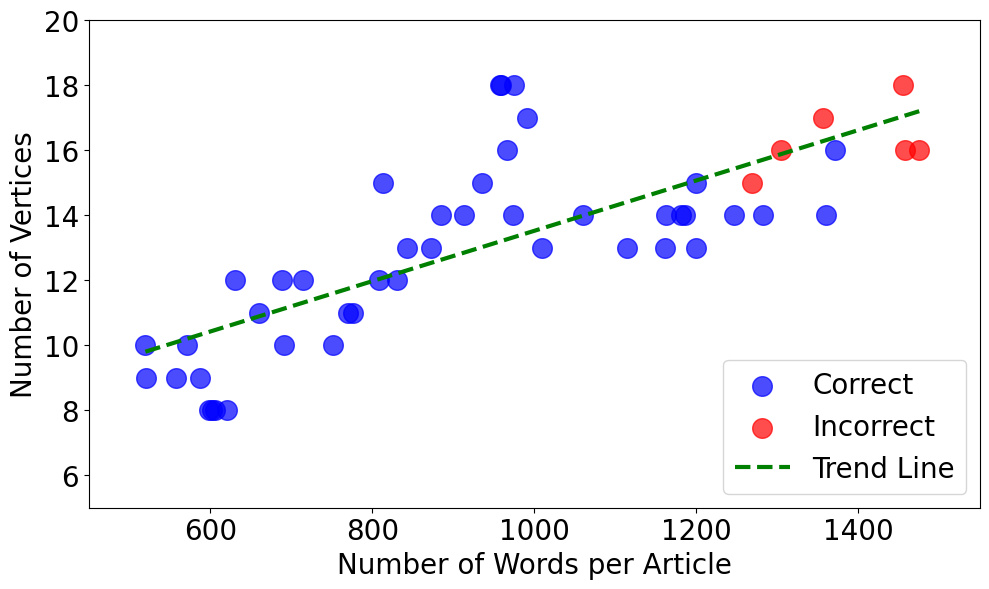}
    \caption{Performance of TrumorGPT in constructing semantic health knowledge graphs for fact-checking public health claims. Blue points represent correctly fact-checked outcomes, while red points indicate incorrect outcomes. The green dotted line shows the trend of increasing knowledge graph size (number of vertices) as the number of words per article grows.}
    \label{fig:kg_words}
\end{figure}

\begin{table*}[tb]
    \centering
    \renewcommand{\arraystretch}{1.5}
        \centering
        \newcolumntype{P}[1]{>{\arraybackslash\centering}p{#1}}
        \caption{Examples of fact-checking accuracy of TrumorGPT, illustrating two distinct cases where TrumorGPT successfully verifies health-related queries in the context of United States politics. \label{tab:example}}
        \begin{tabular}{P{.15\textwidth}P{.35\textwidth}P{.30\textwidth}P{.08\textwidth}}
            \toprule
            \textbf{Query} & \textbf{Semantic Health Knowledge Graph} & \textbf{Semantic Reasoning} & \textbf{Ground Truth} \tabularnewline  \midrule
             The number of COVID-19 deaths recorded so far in 2021 has surpassed the total for 2020. & 
             \adjustbox{valign=t}{\begin{tikzpicture}[auto, node distance=1cm,>=latex',thick, every text node part/.style={align=center}]
                \node [block, text width = 1cm](a){CDC};
                \node [below = 1cm of a](a1){};
                \node [block, text width = 1.6cm, left = of a1](b){U.S. Deaths in 2020};
                \node [block, text width = 1.6cm, right = of a1](c){U.S. Deaths in 2021};
                \node [block, text width = 1.8cm, below = 1cm of a1](d){More Deaths in 2021 than 2020};
                \node [below = 1cm of d](a2){};
                \node [block, text width = 1.6cm, left = of a2](e){National Outlets};
                \node [block, text width = 1.8cm, right = of a2](f){Johns Hopkins Dashboard};
                \draw [->,thick] (a) -- (b) node[left = 0.3cm, midway]{reported 385,443}; 
                \draw [->,thick] (a) -- (c) node[right = 0.3cm, midway]{reported 446,197}; 
                \draw [->,thick] (b) -- (d) node[left = 0.3cm, midway]{implies};
                \draw [->,thick] (c) -- (d) node[right = 0.3cm, midway]{implies};
                \draw [->,thick] (a) -- (d) node[midway, midway]{agrees\\with};
                \draw [->,thick] (e) -- (d) node[left = 0.3cm, midway]{reported};
                \draw [->,thick] (f) -- (d) node[right = 0.3cm, midway]{shows};
                \draw [->,thick] (e) -- (f) node[midway, midway]{used data from};
            \end{tikzpicture}} & \texttt{The statement is true. CDC data reports 385,443 COVID-19 deaths in 2020 and 446,197 deaths in 2021, confirming that more deaths occurred in 2021. This conclusion is supported by national outlets like The New York Times, which used data from the Johns Hopkins University.} & True \tabularnewline \tabularnewline
             Florida sort of had a mandate because they were giving the vaccine, they were demanding everybody take the vaccine. & \adjustbox{valign=t}{\begin{tikzpicture}[auto, node distance=1cm,>=latex',thick, every text node part/.style={align=center}]
                \node [block, text width = 1.6cm](a){Governor Ron DeSantis};
                \node [block, text width = 1.6cm, below = 1.7cm of a](b){Florida};
                \node [block, text width = 1.6cm, right = 1.5cm of b](c){Legislation and Bills};
                \node [block, text width = 1.6cm, right = 1.5cm of a](d){Vaccine Mandates};
                \draw [->,thick] (b) -- (a) node[left, midway]{governed by}; 
                \draw [->,thick] (c) -- (b) node[midway, midway]{enforced in}; 
                \draw [->,thick] (c) -- (d) node[right, midway]{blocked}; 
                \draw [->,thick] (a) -- (d) node[midway, midway]{public statements\\opposing}; 
                \draw [->,thick] (a) -- (c) node[midway, midway]{signed}; 
            \end{tikzpicture}} & \texttt{The statement is false. Governor Ron DeSantis publicly opposed vaccine mandates and signed legislation to block such mandates in Florida. These laws were enforced, ensuring that vaccines remained voluntary, not mandated by the state.} & False \tabularnewline \bottomrule
        \end{tabular}
\end{table*}

In addition to its impressive performance in the verification of news truthfulness, TrumorGPT also excels in validating factual statements, a capability that is greatly enhanced by the integration of the GPT-4 model. For example, TrumorGPT can accurately verify the statement, ``A balanced diet can help manage diabetes'', by referencing current medical research and guidelines from health organizations that highlight how maintaining a balanced diet helps in controlling blood sugar levels, reducing the risk of complications associated with diabetes. This demonstrates the ability of TrumorGPT to apply common sense reasoning to health and medical knowledge effectively. The advanced language comprehension and analysis capabilities of GPT-4 boost the accuracy and reliability of TrumorGPT in fact-checking. This enables TrumorGPT to effectively and accurately recognize the truth in various factual statements, making it a valuable tool in fact verification. When asked about technical health knowledge, such as diagnosing specific conditions, TrumorGPT also provides a disclaimer and suggests users to seek professional advice from a medical doctor to ensure accurate diagnosis and appropriate treatment.

\section{Discussion}
\label{sec:dis}
In this section, we provide a technical analysis of contemporary RAG techniques by comparing HybridRAG \cite{sarmah2024hybridrag} and LightRAG \cite{guo2024lightrag} with TrumorGPT, focusing on their methodologies, reasoning capabilities, and efficiency in health-related fact-checking. We also outline potential enhancements for TrumorGPT inspired by these state-of-the-art approaches.

HybridRAG \cite{sarmah2024hybridrag} employs a dual-retrieval mechanism by integrating vector-based text retrieval with graph-based knowledge extraction. Technically, it first performs a conventional embedding search to identify relevant textual passages, then simultaneously accesses a structured knowledge graph to extract entity relationships and contextual links. This approach offers greater coverage and flexibility compared to the GraphRAG employed by TrumorGPT. For fact-checking scenarios, HybridRAG can retrieve textual evidence even when a claim does not neatly match an existing knowledge graph entry. For example, if a health claim is phrased unconventionally or involves implicit context, the vector search component can still identify pertinent data that might not be captured by a direct graph lookup. The inclusion of a knowledge graph within HybridRAG further ensures that precise fact relationships are maintained, which keeps the answers accurate. 

In contrast, the GraphRAG employed by TrumorGPT is finely tuned for precision within its curated semantic health knowledge graph. When the query aligns well with the graph, meaning that the claim clearly maps to existing entities and relationships, the system consistently delivers highly faithful and contextually precise fact-checking outcomes. Although its targeted approach may sometimes result in an ``undetermined'' response in cases where the specific fact is not explicitly represented, this selectivity also serves to avoid the introduction of extraneous details that could otherwise complicate the fact verification process. This disciplined retrieval strategy ensures that TrumorGPT remains competitive by providing reliable outputs with minimal noise, which is critical in domains where accuracy is paramount.

LightRAG \cite{guo2024lightrag} adopts an efficient retrieval framework by organizing information into a dual-level knowledge graph, where low-level retrieval targets specific facts and high-level retrieval captures broader thematic context. The design of LightRAG offers enhanced efficiency and broader retrieval capabilities by employing dual-level retrieval and an incremental graph update mechanism. These features are especially useful in a dynamic field like health, where new studies and news frequently appear. For example, LightRAG can rapidly integrate fresh medical data and capture both direct relationships and broader thematic connections, enabling the retrieval of comprehensive evidence sets, such as an effect chain of a drug from administration to outcome, which may enrich fact-checking explanations. 

In contrast, GraphRAG of TrumorGPT is optimized for precision and operates on a curated set of semantic health knowledge graphs, consistently delivering accurate and focused true or false decisions when the claim maps clearly onto its graph. While LightRAG provides a broader perspective in cases that require complex, multi-step reasoning, TrumorGPT remains highly effective in its domain, ensuring accuracy while maintaining efficient processing and reducing unnecessary information. Adopting elements from LightRAG, such as efficient incremental updates and dual-level retrieval, could further improve TrumorGPT without compromising its strong fact-based verification process.

Drawing from the strengths of both HybridRAG and LightRAG, TrumorGPT can be further enhanced by integrating a hybrid retrieval mechanism that combines the precision of graph-based retrieval with the expansive reach of vector-based contextual search. This dual approach would allow TrumorGPT to not only access highly accurate, structured data from its semantic health knowledge graphs but also capture nuanced, implicit contextual cues present in unstructured text. For instance, when a health claim is expressed in unconventional language or omits explicit entity mentions, the vector search component can retrieve relevant background information and subtle interconnections that a purely graph-based system might overlook. By effectively merging these two retrieval paradigms, the system would be capable of assembling a more comprehensive evidence base, thereby reducing the likelihood of missing pertinent details. Additionally, incorporating incremental graph update techniques would enable TrumorGPT to seamlessly integrate new medical studies and real-time health news without the need for extensive re-indexing or system downtime. This continuous update process is particularly crucial in the dynamic field of health, where the rapid emergence of new data can significantly impact the accuracy of fact-checking outcomes. 

In addition to efficient updating, implementing a dual-level retrieval strategy can further enhance contextual reasoning. Under this strategy, an initial coarse-grained retrieval would rapidly identify broad thematic and relational contexts, while a subsequent fine-grained search would extract specific factual details. This layered retrieval approach not only ensures that both high-level and granular information is captured but also supports more robust multi-hop reasoning by effectively bridging the gap between abstract concepts and concrete facts. Thus, these enhancements would bolster the fact-checking framework of TrumorGPT, striking an optimal balance between precision, efficiency, and comprehensive reasoning.

\section{Conclusion}
\label{sec:conclusion}
Our proposed framework, TrumorGPT, leverages a large language model for accurate verification of health-related statements in fact-checking. Effectively combining semantic health knowledge graphs with retrieval-augmented generation, TrumorGPT exhibits robust performance in fact-checking, especially relevant to the health policy and politics of the United States in preparation for the 2024 presidential election. In particular, we demonstrate how topic-enhanced sentence centrality and topic-specific TextRank, enhanced with few-shot learning, optimizes the formation of semantic health knowledge graphs while retrieval-augmented generation refines the GPT-4 model with the most recent and updated health information, addressing the hallucination issue in large language models. For future work, an interesting direction involves expanding the capabilities of TrumorGPT to tackle additional fact-checking tasks like identifying the origins of rumors or detecting spam content.

\section*{Acknowledgment}
The authors are grateful for helpful discussions on the subject with D.-M. Chiu.

\bibliographystyle{IEEEtran}
\bibliography{IEEEabrv,references}
 
%

\end{document}